%% file: main.tex
\documentclass{article}
\usepackage[numbers]{natbib}

\usepackage[preprint]{neurips_2025}

\usepackage{algorithm}
\usepackage{algorithmic}

\author{
Huanran Li, Jeremy Johnson, Daniel Pimentel-Alarc\'on\\
Department of Electrical Engineering, Mathematics, Biostatistics\\
Wisconsin Institute of Discovery\\
University of Wisconsin-Madison\\
\texttt{\{hli488\}\{jjohnson99\}\{pimentelalar\}@wisc.edu}
}

\title{High Rank Matrix Completion via Grassmannian Proxy Fusion}

\usepackage[utf8]{inputenc} 
\usepackage[T1]{fontenc}    
\usepackage{url}            
\usepackage{booktabs}       
\usepackage{amsfonts}       
\usepackage{nicefrac}       
\usepackage{microtype}      
\usepackage{xcolor}         

\usepackage{graphicx}

\usepackage{amsmath}
\usepackage{amssymb}
\usepackage{mathtools}
\usepackage{amsthm}
\usepackage{dsfont} 
\usepackage[mathscr]{euscript} 

\theoremstyle{plain}
\newtheorem{theorem}{Theorem}[section]

\newcommand{\bs}[1]{\boldsymbol{#1}}

\def \R{\mathbb{R}}
\def \St{\mathbb{S}}
\def \Gr{\mathbb G}

\def \1{{\mathds{1}}}
\def \T{\mathsf{T}}

\def \spn{{\rm span}}
\def \diag{{\rm diag}}

\def \<{\langle}
\def \>{\rangle}

\def \K{{{{\rm K}}}}
\def \n{{{{\rm n}}}}
\def \m{{{{\rm m}}}}
\def \r{{{{\rm r}}}}
\def \p{{{{\rm p}}}}
\def \dist{{{d}}}
\def \lambdaa{{{\lambda}}}

\def \x{{{\bs{{\rm x}}}}}

\def \v{{{\bs{{\rm v}}}}}
\def \w{{{\bs{{\rm w}}}}}

\def \y{{{\bs{{\rm y}}}}}
\def \thetaa{{{\bs{\theta}}}}
\def \xhat{{{\bs{\hat{{\rm x}}}}}}
\def \yhat{{{\bs{\hat{{\rm y}}}}}}

\def \I{{{\bs{{\rm I}}}}}
\def \D{{{\bs{{\rm D}}}}}
\def \X{{{\bs{{\rm X}}}}}

\def \U{{{\bs{{\rm U}}}}}

\def \nablab{{{\bs{\nabla}}}}

\def \Deltab{{{\bs{\Delta}}}}
\def \Gammab{{{\bs{\Gamma}}}}
\def \Epsilonb{{{\bs{{\rm E}}}}}

\def \Thetab{{{\bs{\Theta}}}}
\def \Uhat{{{\bs{\hat{{\rm U}}}}}}

\def \i{{{{\rm i}}}}
\def \j{{{{\rm j}}}}
\def \k{{{{\rm k}}}}
\def \t{{{{\rm t}}}}

\def \o{{{\Omega}}}

\def \sU{{{\mathbb{U}}}}
\def \sX{{{\mathbb{X}}}}

\begin{document}

\maketitle

\begin{abstract}
This paper approaches high-rank matrix completion (HRMC) by filling missing entries in a data matrix where columns lie near a union of subspaces, clustering these columns, and identifying the underlying subspaces. Current methods often lack theoretical support, produce uninterpretable results, and require more samples than theoretically necessary. We propose clustering incomplete vectors by grouping proxy subspaces and minimizing two criteria over the Grassmannian: (a) the chordal distance between each point and its corresponding subspace and (b) the geodesic distances between subspaces of all data points. Experiments on synthetic and real datasets demonstrate that our method performs comparably to leading methods in high sampling rates and significantly better in low sampling rates, thus narrowing the gap to the theoretical sampling limit of HRMC.
\end{abstract}

\section{Introduction}
\label{intro}
\input{0-intro}

\textbf{Paper Organization.} We begin by reviewing previous work and its connection to our model. Next, we introduce our novel GrassFusion model, followed by a discussion of its justification, training dynamics, convergence rate, and computational complexity. Finally, we present experimental results on both synthetically generated and real-world datasets.

\section{Related Work}
\label{relatedWorkSec}
\input{1-background}

\section{Model}
\label{mainSection}
\input{2-model}

\section{Experiments}
\label{experimentSection}

\input{3-Experiment}

\section{Conclusion \& Limitations}
This paper has tackled the challenge of high-rank matrix completion (HRMC) by introducing a novel method that clusters incomplete vectors through proxy subspaces over the Grassmannian. 
Our experiments demonstrate the effectiveness and robustness of GrassFusion across diverse datasets and challenging scenarios. From object tracking in the Hopkins 155 dataset to human activity recognition, handwritten digit clustering, and hyperspectral imaging, GrassFusion consistently achieves competitive performance, often surpassing other state-of-the-art methods, especially under conditions of data with high missing rates. The results highlight GrassFusion's ability to maintain stability across varying sampling rates while delivering low clustering errors. 

One major limitation of our approach is the runtime cost, which averages 10-12 hours per run during experiments. However, this increased computational time is justified by the substantial reduction in clustering error, making our method a valuable tool for HRMC applications where accuracy is paramount.
Overall, this work offers a new perspective on HRMC by leveraging Grassmannian optimization techniques to achieve local convergence guarantees and effectively handle noise. Future research could focus on optimizing runtime and extending the method to handle even more complex datasets and higher-dimensional subspaces, further enhancing its practical utility.

\bibliographystyle{plainnat}
\bibliography{aaai24}

\end{document}

%% file: 0-intro.tex
This paper investigates the task known as high-rank matrix completion (HRMC) \citep{eriksson2012high, elhamifar2016high}, which involves the following objectives: (i) filling in missing entries in a data matrix $\X$ where columns are close to a union of subspaces, (ii) clustering the columns according to these subspaces, and (iii) identifying the underlying subspaces. HRMC is applicable in various domains, such as tracking moving objects in computer vision \citep{kun_huang_minimum_2004, kanatani_motion_2001, xing2024segmentation, yao2024unlabeled}, predicting drug-target interactions for drug discovery \citep{gan2013application, malhat2014clustering}, recognizing user groups in recommender systems \citep{koohi2017new, ullah2014n, zhang2021rp, salamatian2024metascritic}, and inferring network topologies \citep{eriksson2011domainimpute}. Nonetheless, current methods often lack theoretical backing \citep{eriksson2012high, lane2019classifying, yang2015sparse, pimentel2014sample, pimentel2016group, elhamifar2016high, tsakiris2018theoretical}, produce results that are difficult to interpret and thus unsuitable for scientific and security applications \citep{ji2017deep}, and require more samples than what is theoretically necessary \citep{pimentel2016information}.
The main challenge with these limitations is measuring distances (such as Euclidean distances or inner products) between vectors that are only partially observed. This is difficult because calculating distances necessitates overlapping observations, which are less likely in scenarios with sparse sampling \citep{eriksson2012high}. 

To address this issue, we propose a novel approach that clusters incomplete vectors by grouping proxy subspaces, thereby sidestepping the problem of measuring distances between incomplete vectors.
Our approach (see Figure \ref{fig:grassfusion}) involves assigning each point with incomplete data to a unique subspace with complete data and then minimizing two criteria over the Grassmannian: (a) the chordal distance between each point and its corresponding subspace, ensuring the subspace can potentially complete the observed vector, and (b) the geodesic distances between subspaces of all data points, encouraging subspaces of similar points to merge, effectively representing the same space. This optimization takes place over the Grassmannian manifold instead of Euclidean space. Once this optimization is completed, we cluster the proxy subspaces using standard methods like k-means or spectral clustering \citep{bottou1995convergence, von2007tutorial}. This results in a clustering of the incomplete data (goal (ii)). Subsequently, we can fill in the missing entries (goal (i)) using any conventional low-rank matrix completion algorithm \citep{recht2011simpler}. Finally, after clustering and completing the data, the underlying subspaces can be easily identified (goal (iii)) using singular value decomposition. 

\begin{figure*}
    \centering
    \includegraphics[width=0.8\linewidth, clip, trim = {0 4.2cm 2.2cm 0}]{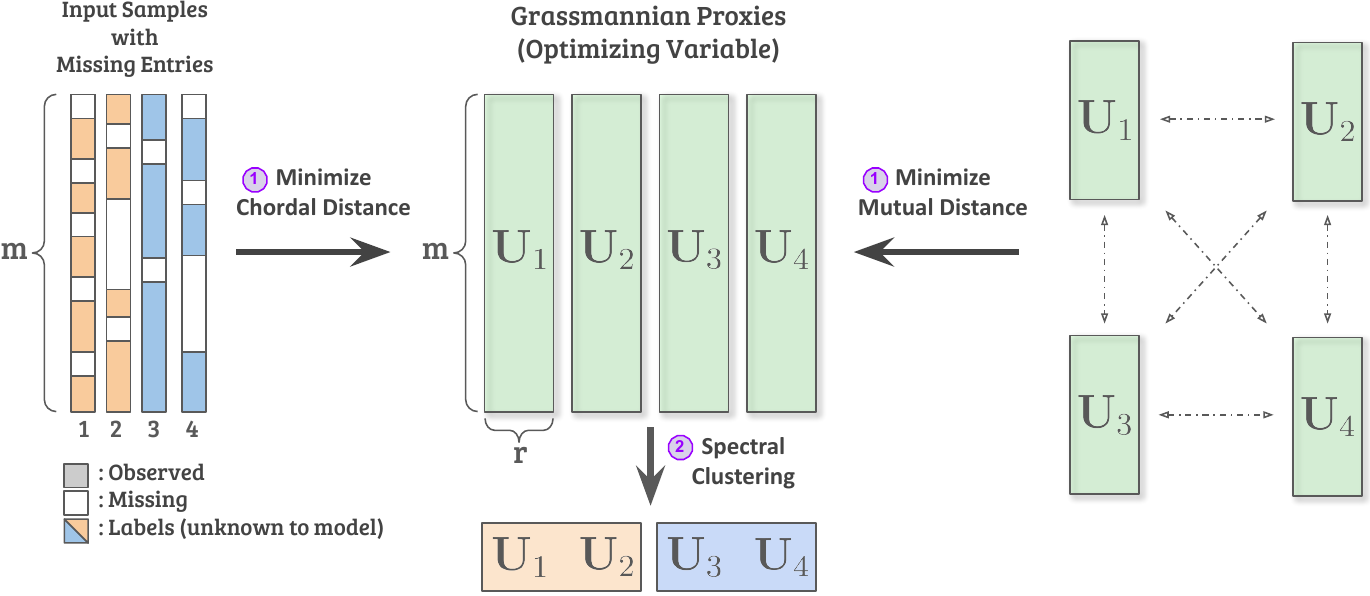}
    \caption{GrassFusion: Assigning each point with incomplete data to a unique subspace with complete data and then minimizing two criteria over the Grassmannian: (a) the chordal distance between each point and its corresponding subspace, ensuring the subspace can potentially complete the observed vector, and (b) the geodesic distances between subspaces of all data points, encouraging subspaces of similar points to merge, effectively representing the same space. Once this optimization is completed, we cluster the proxy subspaces using standard methods like spectral clustering.}
    \label{fig:grassfusion}
\end{figure*}

Transitioning to the Grassmannian framework enables us to utilize established Riemannian manifold optimization techniques and local convergence assurances, which are typically the most robust guarantees available for non-convex problems of this nature \citep{pimentel2014sample, pimentel2016group}. Furthermore, our method does not need prior knowledge of the number of subspaces, is inherently capable of handling noise, and requires only an upper bound on the dimensions of the subspaces. We validate our approach with experiments on both synthetic and real datasets. These experiments demonstrate that even in this initial straightforward version, (a) our method performs on par with the current leading methods in straightforward scenarios (high sampling rates), and (b) our method significantly surpasses the leading methods in challenging scenarios (low sampling rates), thereby narrowing the gap to the fundamental sampling limit of HRMC (see Figure \ref{synthetic_fig}).

%% file: 1-background.tex
HRMC can be viewed as a simultaneous extension of low-rank matrix completion (LRMC) and subspace clustering (SC). In LRMC, the objective is to recover missing entries of a data matrix whose columns reside in a single subspace (hence, low-rank) \citep{recht2011simpler}. HRMC extends LRMC to scenarios where data columns lie in a union of subspaces \citep{eriksson2012high}. Similarly, SC aims to cluster data columns that lie in a union of subspaces \citep{vidal2011subspace}. HRMC generalizes SC to situations where the data is incomplete.

Both LRMC and SC have been extensively studied in recent years, leading to numerous algorithms and guarantees for various conditions, encompassing diverse noise distributions \citep{mcrae2021low}, data distributions \citep{fan2021shrinkage, qu2015subspace}, privacy constraints \citep{wang2015differentially}, outliers \citep{huang2021robust, peng2017constructing}, coherence assumptions \citep{chen2014coherent}, affinity learning \citep{tang2018learning}, sampling schemes \citep{balzano2010online, pimentel2015adaptive}, and more \citep{vidal2005generalized, agarwal2004k, tipping1999probabilistic, derksen2007segmentation}. Thus, for HRMC, if the data could be completed, an SC algorithm could be employed to identify clusters and the underlying union. Conversely, if one could cluster the incomplete data, an LRMC algorithm could be applied to each cluster to fill in the missing entries and identify each subspace.

\textbf{HRMC vs LRMC.} In a high-rank matrix completion (HRMC) problem, if the number of underlying subspaces, denoted by $\K$, and the maximum dimension of these subspaces, denoted by $\r$, are both low, one might consider approaching HRMC as a low-rank matrix completion (LRMC) problem. In this scenario, a single subspace containing all columns of $\X$ would have a dimension no greater than $\r' := \r \cdot \K$. However, this approach ignores the union structure inherent in the data, thus requiring more observed entries to complete $\X$. This can be understood by noting that each column must have more observed entries than the rank of the subspace it belongs to \citep{pimentel2016information}. Consequently, even if $\r'$ is sufficiently low, using LRMC would necessitate $\K$ times more observations than HRMC. This is particularly problematic in fields like Metagenomics or Drug Discovery, where data is extremely sparse and expensive to obtain. Moreover, $\r'$ might be too large to feasibly apply LRMC.

\textbf{HRMC vs SC.} Similarly, for a high-rank matrix completion (HRMC) problem, one common strategy is to first fill in the missing entries using naive methods (such as zeros, means, or low-rank matrix completion) before applying any suitable full-data clustering method. There is a wide range of subspace clustering (SC) theories and algorithms that ensure perfect clustering under reasonable conditions (e.g., adequate sampling and subspace separation) \citep{vidal2005generalized, agarwal2004k, tipping1999probabilistic, derksen2007segmentation, li2024group}. Unfortunately, while this naive approach might work when data is missing at a rate inversely proportional to the dimension of the subspaces \citep{tsakiris2018theoretical}, it tends to fail when there is a moderate amount of missing data. This is because data filled in naively no longer conforms to a union of subspaces \citep{elhamifar2016high}.

\textbf{Tailored HRMC algorithms.} Algorithms designed to address the HRMC problem can be categorized into the following subgroups: (1) \emph{neighborhood} methods, which cluster points based on their overlapping coordinates \citep{eriksson2012high}; (2) \emph{alternating} methods, such as EM \citep{pimentel2014sample}, $k$-subspaces \citep{balzano2012k}, group-lasso \citep{pimentel2016group}, S$^3$LR \citep{li2016structured}, or MCOS \citep{li2021matrix}; (3) \emph{liftings}, which leverage the second-order algebraic structure of unions of subspaces \citep{vidal2005generalized, ongie2017algebraic, fan2018non}; and (4) \emph{integer programming} \citep{soniinteger}. Neighborhood methods require either a large number of observations or a super-polynomial number of samples to produce sufficient overlaps. Liftings involve squaring the dimension of an already high-dimensional problem, which significantly limits their practicality. Integer programming approaches are similarly constrained to small datasets. In summary, while considerable research has been devoted to HRMC, existing algorithms have limitations, and their theoretical guarantees remain largely unexplored.

\textbf{The fundamental sampling limit.} Theoretically, the sampling limit of HRMC is the same as that of LRMC, except for a negligible \emph{checksum} constant \citep{pimentel2016information}. Intuitively, this suggests that HRMC should be able to handle nearly the same amount of missing data as LRMC. However, this is not observed in practice. Unlike LRMC algorithms, all current HRMC algorithms fall short of this limit. An important open question is whether it is possible to reach the information-theoretic sampling limit with a polynomial-time algorithm or if there is an inherent statistical-computational trade-off that prevents this. This paper narrows the gap towards the theoretical sampling limit (see Figure \ref{synthetic_fig}), thereby providing insight into this fundamental open question.

\textbf{Our work in context.} Among the methods discussed, the approach of this paper is conceptually closest to \citep{mishra2019riemannian}, which employs a similar Grassmannian optimization model to tackle the single-subspace problem of LRMC. This paper extends these concepts to the more challenging multiple-subspace problem of HRMC while preserving the local convergence guarantees stated in Proposition 5.1 of \citep{mishra2019riemannian}. The primary distinction between \citep{mishra2019riemannian} and our approach is that the former uses a predefined subset of geodesic distances (see equations (17)-(19) in \citep{mishra2019riemannian}) to identify the Grassmannian points that need to be matched. In \citep{mishra2019riemannian}, these geodesic subsets can be selected somewhat arbitrarily because all points in LRMC belong to the same subspace. Consequently, a \emph{gossip} protocol is effective for the simpler LRMC problem. In contrast, HRMC does not permit prior knowledge of which points must be matched.

%% file: 2-model.tex
\textbf{GrassFusion Objective.} Consider vectors $\x_1, \dots, \x_\n \in \R^\m$ that are close to a union of subspaces, each with a dimension no greater than $\r$. Let $\x_\i^\o \in \R^{|\o_\i|}$ represent the observed components of $\x_\i$, indexed by $\o_\i \subset \{1, \dots, \m\}$. We propose associating each observed vector $\x_\i^\o$ with a proxy subspace $\sU_\i := \spn(\U_\i)$. Our objective is to approximate the actual subspace $\sU_\i^\star$ to which $\x_\i$ belongs by (a) ensuring the proxy subspace $\sU_\i$ includes a potential completion of $\x_\i^\o$ and (b) minimizing the distance between proxy subspaces $\sU_\i$ and $\sU_\j$ to reach a consensus. This is formulated through the following optimization problem, where the first term addresses goal (a) and the second term addresses goal (b):
\begin{align}
\label{stiefelEq}
\min_{\U_1,\dots,\U_\n \in \St(\m,\r)} \
\sum_{\i=1}^\n \dist_c^2(\x_\i^\o,\U_\i)
 + 
\frac{\lambda}{2} \sum_{\i,\j=1}^\n \dist_g^2(\U_\i,\U_\j),
\end{align}
where
$$\dist_c^2(\x_\i^\o,\U_\i) := 1-\sigma_1^2(\X_\i^{0\T} \U_\i),$$
$$\dist_g^2(\U_\i,\U_\j) := \sum_{\ell=1}^\r \arccos^2 \sigma_\ell(\U_\i^\T\U_\j).$$
Here, $\lambda \geq 0$ is a regularization parameter, $\sigma_\ell(\bs{\cdot})$ represents the $\ell^{\rm th}$ largest singular value, and $\X_\i^0$ is the orthonormal matrix that spans all possible completions of a non-zero $\x_\i^\o$. The matrix $\X_\i^0$ can be constructed as follows. If $\x_\i^\o = \bs{0}$, then $\X_\i^0 = \I$, the identity matrix. Otherwise, $\X_\i^0$ is an $\m \times (\m-|\o_\i|+1)$ matrix created by normalizing $\x_\i^\o$, filling the unobserved rows with zeros, and concatenating with the $(\m-|\o_\i|)$ canonical vectors corresponding to the unobserved rows of $\x_\i^\o$. For instance, if $\x_\i^\o \neq \bs{0}$ and is observed in the first $|\o_\i|$ rows, then

\begin{align*}
\X_\i^0 \ = \
\left[ \begin{matrix} \\ \\ \\ \\ \\[-5pt] \end{matrix} \right.
\underbrace{
\begin{array}{c|c}
\frac{\x_\i^\o}{\|\x_\i^\o\|} & \hspace{.3cm} {\bs{0}} \hspace{.3cm} \\[10pt]
\hline
& \\[-5pt]
\bs{0} & {\I}\\
& \\[-5pt]
\end{array}}_{\m-|\o_\i|+1}
\left. \begin{matrix} \\ \\ \\ \\ \\[-5pt] \end{matrix} \right]
\begin{matrix}
\left. \begin{matrix} \vspace{.1cm} \\[11pt] \end{matrix} \right\} |\o_\i| \hspace{.8cm} \\
\left. \begin{matrix} \vspace{.1cm} \\[11pt] \end{matrix} \right\} \m-|\o_\i|.
\end{matrix}
\end{align*}

When $\x_\i^\o$ is fully observed, $\X_\i^0$ simplifies to $\x_\i$ normalized. Recall that the Grassmannian $\Gr(\m, \r)$ is a quotient space of the Stiefel manifold $\St(\m, \r)$ by action of the orthogonal group of $\r \times \r$ orthonormal matrices. Since both terms $\dist_c(\x_\i^\o,\U_\i)$ and $\dist_g(\U_\i,\U_\j)$ are invariant under this quotient, the objective function in \eqref{stiefelEq} does not depend on the choice of basis, and descends to a function on the Grassmannian. 

\textbf{The Reason of Choosing Chordal Distance.} The \emph{chordal distance} $d_c(\x_\i^\o, \U_\i)$, as described in \citep{dai2012}, is not a formal distance on the Grassmannian. Instead, it measures how far $\sU_\i$ is from including a possible completion of $\x_\i^\o$. Specifically, $d_c(\x_\i^\o, \U_\i)$ is defined as the cosine of the angle between the closest completion of $\x_\i^\o$ and the $r$-plane $\sU_\i$. If the largest singular value $\sigma_1(\X_\i^{0\T}\U_\i)$ equals 1, then $\sX_\i^0$ and $\sU_\i$ intersect along at least a line, indicating that the proxy subspace $\sU_\i$ contains a potential completion of $\x_\i^\Omega$. Although merely ensuring that $\sU_\i$ contains a possible completion does not distinguish between different possible completions, consensus is achieved as various proxies $\sU_\i$ and $\sU_\j$ are brought closer by the geodesic term. This means that $\sU_\i$ and $\sU_\j$ can only cluster together if both contain possible completions of both $\x_\i^\Omega$ and $\x_\j^\Omega$. The term \emph{chordal distance}, used in this context, is borrowed from \citep{dai2012} and differs from the conventional chordal distance between points on the Grassmannian \citep{conway1996}. Refer to Figure \ref{geodesicFig} for further intuition.

\begin{figure}
\centering
\includegraphics[height=0.12\textwidth]{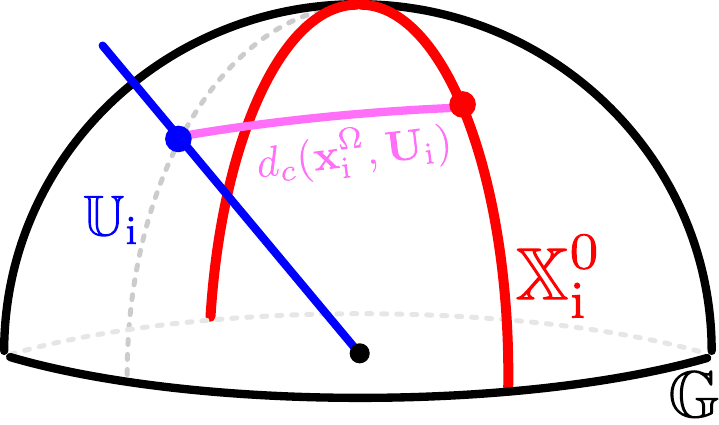}
\hspace{1.2cm}
\includegraphics[height=0.12\textwidth]{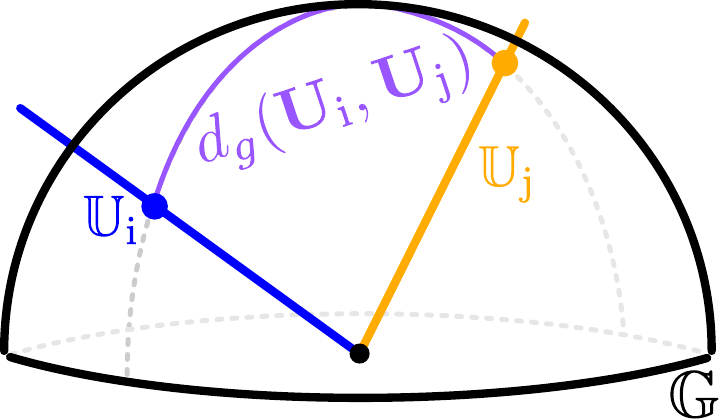}
\hspace{0.75cm}
\includegraphics[height=0.12\textwidth]{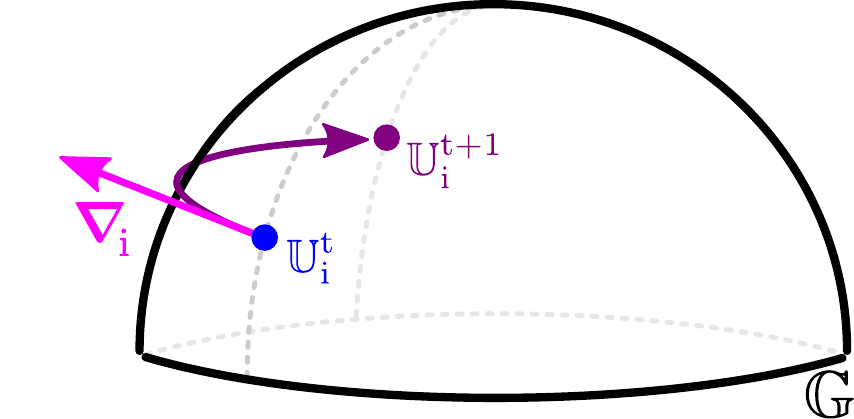}
\caption{The semi-spheres represent the Grassmannian $\Gr(\m, \r)$, where each point $\sU_\i$ denotes a subspace (in the case of $\Gr(3, 1)$, this is represented by the line extending from the origin to $\sU_\i$). \textbf{Left:} The \emph{chordal} distance $\dist_c(\x_\i^\o, \U_\i)$ serves as an informal measure of the distance between the subspace $\sU_\i$ and the incomplete point $\x_\i^\o$. This illustration is intended for intuitive purposes only, as $\sX_\i^0$ may not reside on the same Grassmannian, and the chordal distance should not be equated with geodesic distance. \textbf{Right:} The \emph{geodesic} distance $\dist_g(\U_\i, \U_\j)$ measures the distance over the Grassmannian between $\sU_\i$ and $\sU_\j$. \textbf{Bottom:} The Euclidean gradient vector $\nablab_\i$ deviates from the Grassmann manifold; thus, each geodesic step must be adjusted to account for the curvature of the Grassmannian, as specified by \eqref{stepEq}.}
\label{geodesicFig}
\end{figure}

\textbf{Trade-Off Parameter $\mathbf{\lambda}$ \& Proxy's Rank $\mathbf{r}$.}
The chordal term in \eqref{stiefelEq} ensures that each subspace remains close to its assigned data point, while the geodesic term encourages subspaces from different data points to be close to each other. The balance between these two quantities is controlled by the penalty parameter $\lambdaa \geq 0$. 
For $\lambdaa > 0$, the geodesic term forces subspaces from different data points to move closer together, even if they no longer contain their assigned data points precisely. As $\lambdaa$ increases, the subspaces converge more closely (see Figure \ref{fusionFig}). 
The impact of $\lambdaa$ is ultimately reflected in the distance matrix $\D$, which determines the number of clusters. 
To determine the optimal $\lambdaa$, one can perform a goodness-of-fit test, such as the minimum effective dimension \citep{kun_huang_minimum_2004, vidal2005generalized}, which quantifies the tradeoff between accuracy and degrees of freedom. In practice, we observed that selecting $\lambdaa$ within the range of $[10^{-2}, 10^{-5}]$ yields optimal model performance with only minimal variation. Similarly, we can iteratively increase $\r$ in \eqref{stiefelEq} to identify all data points that lie in 1-dimensional subspaces, then those in 2-dimensional subspaces, and so on (pruning the data at each iteration). This will provide an estimate of the number of subspaces $\K$ and their dimensions.

\begin{figure*}
\centering
\includegraphics[width=0.9\textwidth]{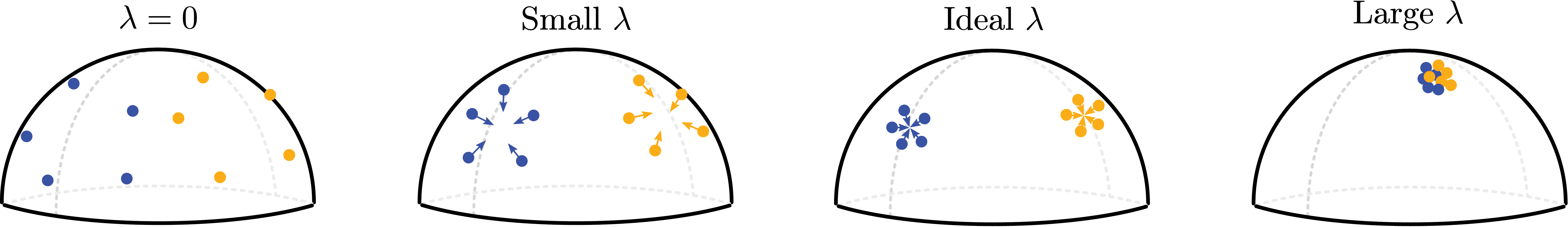}
\caption{$\lambdaa \geq 0$ in \eqref{stiefelEq} regulates how clusters fuse together. If $\lambdaa=0$, each point is assigned to a subspace that exactly contains it (overfitting). The larger $\lambdaa$, the more we penalize subspaces being apart, which results in subspaces getting closer to form fewer clusters. The extreme case $\lambdaa=\infty$ is the special case of PCA and LRMC, where only one subspace is allowed to explain all data.}
\label{fusionFig}
\end{figure*}

\textbf{Proxy Initialization.}
We initialize the proxies such that each contains its assigned observed vector, ensuring that each chordal term is zero. For each $\i$, we first construct an $\m \times \r$ matrix with columns drawn from $\mathcal{N}(0, I_m)$, then replace the first column with $\x_\i^\o$. Next, we orthonormalize this matrix to produce the initial estimate $\U_\i$. By construction, $\U_\i$ includes $\x_\i^\o$, ensuring that $\dist_c(\x_\i^\o, \U_\i) = 0$.

\textbf{Gradient Descent on Grassmannian.} Given the highly non-convex nature of this problem, guaranteeing anything beyond local convergence is challenging. It's important to highlight that the innovation of this work lies in the model description rather than in the optimization process. Numerous standard techniques and libraries \citep{pymanopt} are available for this purpose. We chose to employ standard gradient \citep{edelman_geometry_1998} and line-search \citep{absil2009optimization} methods, which have been successfully applied in other research \citep{mishra2019riemannian, dai2012}. The gradients of \eqref{stiefelEq} with respect to $\U_\i$ over the Grassmannian are given by:

\vspace*{-2.5mm}
\begin{align*}
\nablab \dist^{2}_c(\x_\i^\o,\U_\i) &= -2 \sigma_1( \X_\i^{0\T} \U_\i) (\I-\U_\i \U_\i^\T) \v_\i\w_\i^\T,   \\
\nablab \dist_g^2(\U_\i,\U_\j) &=
\sum_{\ell=1}^r \frac{2\arccos \sigma_\ell(\U_\i^\T\U_\j)}{\sqrt{1-\sigma_\ell^2(\U_\i^\T\U_\j)}}
(\U_\i\U_\i^\T-\I)\v_{\i\j}^\ell\w_{\i\j}^{\ell\T},
\end{align*}
where $\v_\i$ and $\w_\i$ are the leading left and right singular vectors of $\X_\i^0 \X_\i^{0\T} \U_\i$, while $\v_{\i\j}^\ell$ and $\w_{\i\j}^\ell$ are the $\ell^{\rm th}$ left and right singular vectors of $\U_\j\U_\j^\T\U_\i$. For a detailed explanation of how Grassmannian gradients are generally computed, refer to \citep{edelman_geometry_1998}.

Let $\nabla_\i$ represent the gradient of \eqref{stiefelEq}. Recall that $\U_\i - \eta \nablab_\i$ deviates from the Grassmannian for any step size $\eta \neq 0$ (see Figure \ref{geodesicFig}). To account for the manifold's curvature, the update after taking a geodesic step of size $\eta$ over the Grassmannian in the direction of $-\nablab_\i$ is given by equation (2.65) in \citep{edelman_geometry_1998}, which in our context simplifies to:
\begin{align}
\label{stepEq}
\U_\i \ &\gets \ \Big[\U_\i \Epsilonb_\i \ \ \ \ \Gammab_\i \Big]
\left[
\begin{matrix}
\diag \cos (\eta \mathbf{\Upsilon}_\i)  \\
\diag \sin (\eta \mathbf{\Upsilon}_\i)
\end{matrix}
\right]
\Epsilonb_\i^\T,
\end{align}
where $\Gammab_\i \mathbf{\Upsilon}_\i \Epsilonb_\i^\T$ is the compact singular value decomposition of $-\nablab_\i$. 

\textbf{Local Convergence.} In our implementation, we utilize Armijo step sizes, a standard method that provides local convergence guarantees \citep{absil2009optimization}. The step size is defined as $\eta = \beta^\nu \eta_0$, where $\eta_0 > 0$, and $\beta, \gamma \in (0,1)$ are Armijo tuning parameters associated with the initial step size and the granularity of the step search \citep{absil2009optimization}. Here, $\nu$ is the smallest non-negative integer such that
\small
\begin{align*}
\sum_{\i=1}^\n f_\i(\U_\i) - f_\i(R_{\U_\i}(\beta^\nu \eta_0 \nablab_\i))
\ \geq \
-\gamma\sum_{\i=1}^\n\langle \nablab_\i, \beta^\nu \eta_0 (-\nablab_\i)\rangle,
\end{align*}
\normalsize
where $f_\i(\U_\i)$ represents the component of the objective function \eqref{stiefelEq} with $\i$ fixed and $\j$ varying, and $R_{\U_\i}(\Deltab)$ executes the geodesic step as described by \eqref{stepEq} in the direction of $\Deltab$.

We adapt Theorem 4.3.1 and Corollary 4.3.2 from \citep{absil2009optimization} to our context for better demonstration and clarity.

\begin{theorem}
\label{mainThm}
Consider the sequence ${(\U_1,\U_2,\dots,\U_\n)}$ generated by the geodesic steps outlined in equation \eqref{stepEq} with Armijo step sizes $\eta$ as defined above. This sequence will converge to a critical point of \eqref{stiefelEq}.
\end{theorem}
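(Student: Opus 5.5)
The plan is to recast the iteration as an instance of the Riemannian line-search framework of \citep{absil2009optimization} and then invoke Theorem 4.3.1 and Corollary 4.3.2.

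\textbf{Product manifold and objective.} Work on the product manifold $\mathcal{M} := \Gr(\m,\r)^\n$. The objective \eqref{stiefelEq} is invariant under the $O(\r)$ action on each factor, so it descends to a function $F:\mathcal{M}\to\R$. Equip $\mathcal{M}$ with the product metric. Then the Riemannian gradient of $F$ is the tuple $(\nablab_1,\dots,\nablab_\n)$, with one horizontal component per factor as given above.

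\textbf{Retraction.} The update \eqref{stepEq} is the Grassmannian exponential map from equation (2.65) of \citep{edelman_geometry_1998}. Applied componentwise, it is the exponential map $R$ of the product manifold. The exponential map is a smooth retraction: $R_\U(0)=\U$ and $DR_\U(0)=\mathrm{id}$.

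\textbf{Gradient-related directions.} The search direction $-(\nablab_1,\dots,\nablab_\n)$ is gradient-related in the sense of Definition 4.2.1 of \citep{absil2009optimization}. Indeed, $\<\mathrm{grad}F,-\mathrm{grad}F\> = -\|\mathrm{grad}F\|^2$, and the direction is bounded on any subsequence converging to a noncritical point. The Armijo rule stated above is exactly the backtracking condition of Algorithm 1 in \citep{absil2009optimization}, written on the product manifold with the sum over $\i$ playing the role of the product inner product.

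\textbf{Conclusion from the cited results.} Theorem 4.3.1 then gives that every accumulation point of the iterates is a critical point of $F$. Since $\Gr(\m,\r)^\n$ is compact, the sublevel set of the initial point is compact. Corollary 4.3.2 therefore yields $\|\mathrm{grad}F(\U^{(k)})\|\to 0$, with the distance of the iterates to the critical set tending to zero. This is the intended meaning of ``converges to a critical point.''

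\textbf{Main obstacle: smoothness of $F$.} Theorem 4.3.1 requires $F$ to be continuously differentiable, and two terms threaten this.
\begin{itemize}
\item $\arccos^2\sigma_\ell$ has derivative factor $\arccos\sigma/\sqrt{1-\sigma^2}$, which stays bounded as $\sigma\to 1$ because its limit is $1$. So the geodesic term is fine away from the cut locus. It can fail to be smooth only where some $\sigma_\ell(\U_\i^\T\U_\j)=0$, or where singular values coincide.
\item $\sigma_1^2(\X_\i^{0\T}\U_\i)$ is nonsmooth where the top singular value is repeated.
\end{itemize}
I would handle both by noting that the sums of squared singular values, and $\sum_\ell\arccos^2\sigma_\ell$, are symmetric spectral functions. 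By Lewis's theorem on spectral functions, these are differentiable wherever the underlying symmetric function is differentiable. This leaves only the cut-locus set $\{\sigma_\ell=0\}$, plus ties in $\sigma_1$ for the chordal term.

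I would then restrict to the sublevel set of the initialization and assume, as the paper implicitly does by writing explicit gradient formulas, that the iterates stay in the open region where $F$ is $C^1$. Under that assumption Absil's theorems apply verbatim. Making this assumption explicit, or upgrading it to an argument that Armijo descent cannot reach the nonsmooth set, is the step I expect to be delicate; the rest is a direct transcription of the cited results.
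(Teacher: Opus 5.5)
Your proposal follows the paper's proof essentially step for step. Both treat the iteration as the Accelerated Line Search of \citep{absil2009optimization} on the product manifold $\Gr(\m,\r)^\n$, using the componentwise exponential map \eqref{stepEq} as retraction, the negative gradient as the gradient-related sequence and the Armijo rule with $c=1$, and then invoke Theorem 4.3.1 and Corollary 4.3.2.

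Your two caveats are correct, and the paper's proof does not address either of them. First, those results give only criticality of accumulation points and $\|\mathrm{grad}\,F\|\to 0$, not convergence to a single point. Second, they need $F$ to be $C^1$, which can fail where $\sigma_1(\X_\i^{0\T}\U_\i)$ is repeated or where some $\sigma_\ell(\U_\i^\T\U_\j)=0$. For your smoothness assumption to suffice, you should require the accumulation points, not merely the iterates, to lie in the $C^1$ region (for example, by assuming the whole initial sublevel set does).
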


\begin{proof} It is sufficient to demonstrate that the gradient steps in \eqref{stepEq} are a specific instance of the Accelerated Line Search (ALS) algorithm described in \citep{absil2009optimization}, where the product manifold $\Gr^n$ represents the Riemannian manifold $\mathcal{M}$, with its tangent space being the Cartesian product of the tangent spaces of each component $\Gr$.

To clarify, let $T\mathcal{M}$ represent the tangent bundle of $\mathcal{M}$, and let $T_\sU\mathcal{M}$ denote the tangent space of $\mathcal{M}$ at $, \sU \in \mathcal{M}$. Here, $, \sU$ is the tuple $(\U_1,\dots,\U_\n)$, and equation \eqref{stepEq} defines the retraction $R_{\U_\i}$ on each component, so that $R_\sU = (R_{\U_1},\dots, R_{\U_n})$. We can verify this as a valid retraction by recognizing equation \eqref{stepEq} as the exponential map $\text{Exp}: T_\sU\Gr \rightarrow \Gr$, noting that on a Riemannian manifold, the exponential map acts as a retraction. Additionally, the product of exponential maps remains an exponential map \citep{absil2009optimization}.

For the sequence of gradient-related tangent vectors, we take the negative gradient, which is inherently gradient-related. The gradient on the product manifold is the Cartesian product of the gradients on each component manifold, i.e., $\nablab(f) = (\nablab(f_1),\dots, \nablab(f_n))$. The inner product on the tangent space is the sum of the inner products on the component tangent spaces. Therefore, if ${\Deltab_{\i,\t}}$, with $\Deltab_{\i,\t} \in T_{\sU_\t}\mathcal{M}i$, is gradient-related for each $\mathcal{M}\i$, then ${(\Deltab_{1,\t}, \dots, \Deltab_{n,\t})}$ is gradient-related on the product manifold.

Moreover, setting $, \sU_{\t+1} = R_{\sU_\t}(\eta_\t\Deltab_\t)$ satisfies the condition required by the ALS algorithm in \citep{absil2009optimization} with $c=1$. Hence, Theorem \ref{mainThm} follows as a direct result of Theorem 4.3.1 and Corollary 4.3.2 in \citep{absil2009optimization}.
\end{proof}

{\textbf{Computational complexity.}
We note that the primary limitation of our approach is its quadratic complexity concerning the number of samples. Fortunately, subspace clustering offers a straightforward method for sketching both samples and features \citep{traganitis2017sketched}. Specifically, one can solve \eqref{stiefelEq} using a subset of $\n' \leq \n$ columns and a subset of $\m' \leq \m$ rows (e.g., those with the most observations), resulting in improved complexity that is quadratic in $\n'$ instead of $\n$. With the solution to \eqref{stiefelEq}, one can apply a clustering method, a LRMC algorithm, and PCA, as previously described, to produce subspace estimates $\Uhat_1, \dots, \Uhat_{\K'}$, with $\K' \leq \K$. Each of the remaining $\n - \n'$ incomplete data points $\x_\i^\o$ not used to solve \eqref{stiefelEq} and having more than $\r$ observations (a fundamental requirement for subspace clustering \citep{pimentel2016information}) can be assigned to the subspace estimate that yields the largest projection coefficient $\thetaa_\i^\k = (\Uhat{}_\k^{\o\T} \Uhat{}_\k^\o)^{-1} \x_\i^\o$, where $\Uhat{}_\k^\o \in \R^{|\o_\i| \times \r}$ represents the restriction of $\Uhat_\k$ to the observed rows of $\x_\i^\o$ (note that $\Uhat{}_\k^{\o\T} \Uhat{}_\k^\o$ is invertible for almost every rank-$\r$ $\Uhat_\k$ whenever $|\o_\i| > \r$ \citep{pimentel2016information}). If $\x_\i^\o$ is assigned to $\Uhat_\k$, its completion can be straightforwardly estimated as $\xhat_\i = \Uhat_\k \thetaa_\i^\k$. Data points $\x_\i^\o$ that are too far from all subspace estimates (i.e., data points whose coefficients are smaller than a predetermined threshold) can be used to solve \eqref{stiefelEq} again for refined clustering.

%% file: 3-Experiment.tex
In this section, we present a series of experiments conducted on both real and synthetic datasets, specifically the Hopkins155 dataset \citep{tron2007benchmark}, the Smartphone dataset for Human Activity Recognition in Ambient Assisted Living (AAL) \citep{UCI}, Handwritten Digit Images (MNIST) \citep{lecun1998mnist}, and Hyperspectral Imaging (HSI) datasets collected by M. Graña, M.A. Veganzons, and B. Ayerdi. For all experiments, we initialize \eqref{stiefelEq} as described in Model Section, with the value of $\r$ known beforehand. We do not specify $\K$ and make no special adjustments for noise, as our approach does not require it. The solution obtained from \eqref{stiefelEq} is used as input for spectral clustering \citep{von2007tutorial}, although other clustering algorithms such as k-means \citep{bottou1995convergence} or DBSCAN \citep{schubert2017dbscan} could also be used, as discussed in Model Section. Accuracy is measured in terms of clustering error, defined as $\min_M \frac{1}{\n} \sum_{i=1}^{\n} \1_{\{M(\yhat) \neq \y\}}$, where $\1$ is the indicator function, and $M$ maps the estimated cluster labels $\yhat \in \{1,\dots,\hat{\K}\}^\n$ assigned to $\x_1^\o,\dots,\x_\n^\o$ to the true labels $\y \in \{1,\dots,\K\}^\n$. All trials are expected to be completed within a maximum of 12 hours on a single-CPU machine.

\textbf{Baseline comparisons.} A recent survey \citep{lane2019classifying} indicates that most state-of-the-art HRMC algorithms, including MC+SSC \citep{yang2015sparse}, EM \citep{pimentel2014sample}, GSSC \citep{pimentel2016group}, MSC \citep{pimentel2016group}, and $k$-subspaces \citep{balzano2012k}, exhibit similar performance, with different algorithms excelling in specific scenarios based on the subspace vs. ambient dimension gap, the fraction of missing data, and the number of subspaces. Drawing from this survey \citep{lane2019classifying} and others \citep{pimentel2016group}, we selected MC+SSC, EM, GSSC, MSC, ZF+SSC and SSC-EWZF as our baselines. The tuning parameters are manually adjusted to optimize clustering performance according to the training results. These baselines have been shown in \citep{lane2019classifying} to perform nearly identically to $k$-subspaces, and k-GROUSE \citep{balzano2012k} in the scenarios examined in our study.

\begin{figure}
    \centering
    \includegraphics[width=\linewidth]{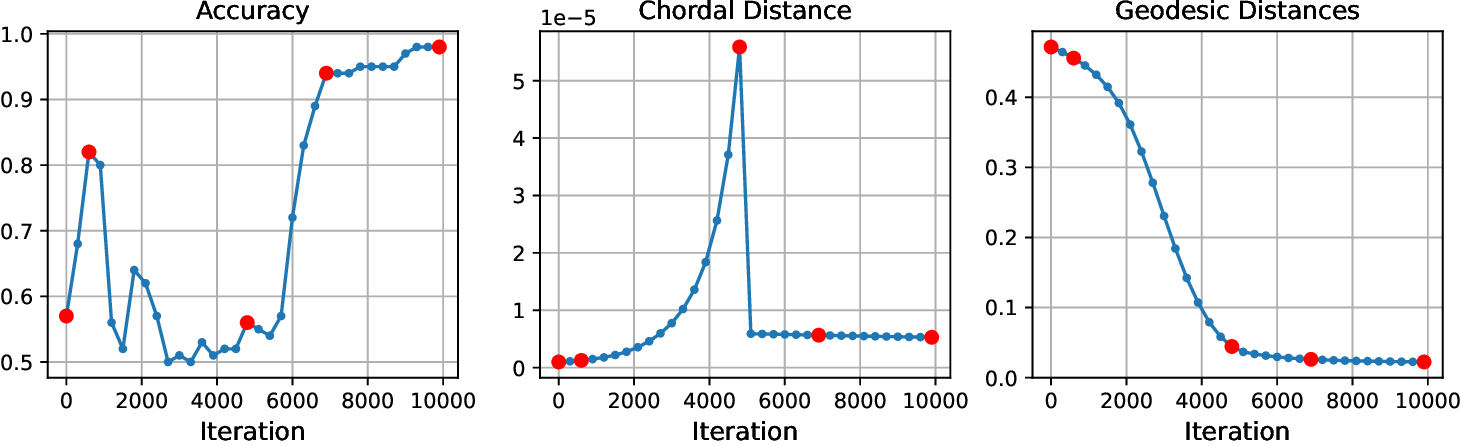}\\ \vspace{0.2cm}
    \includegraphics[width=\linewidth]{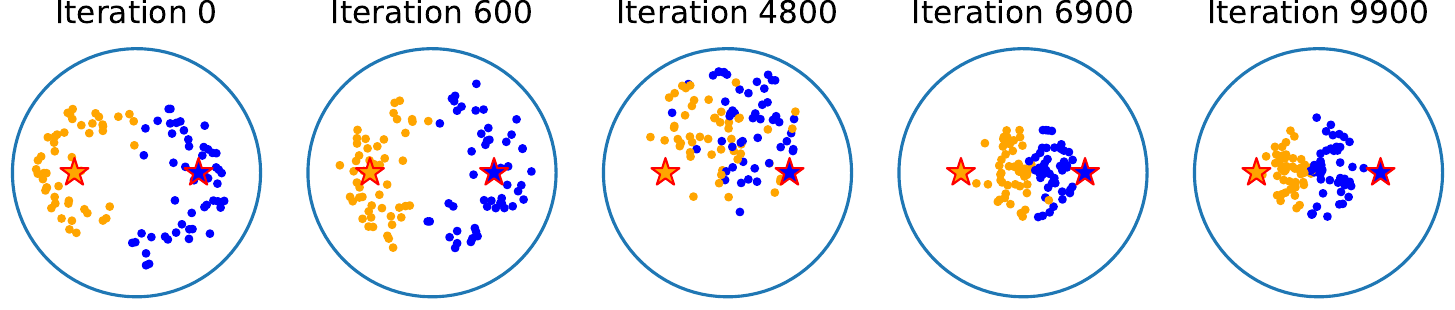}
\caption{Clustering accuracy, sum of chordal distance, and sum of geodesic distance across training iterations. Five key timestamps (marked by red dots at iterations 0, 600, 4800, 6900, and 9900) were selected and visualized using GrassCar\'e \citep{li2024grasscare}, a tool for visualizing subspaces on the Grassmannian. The visualization shows how the subspace proxies (yellow and blue dots) interact with the objective function. The red star indicates the ground-truth subspace used in the initialization.}
    \label{fig:K2}
\end{figure}

\begin{figure*}
\centering
\includegraphics[height = 0.155\linewidth]{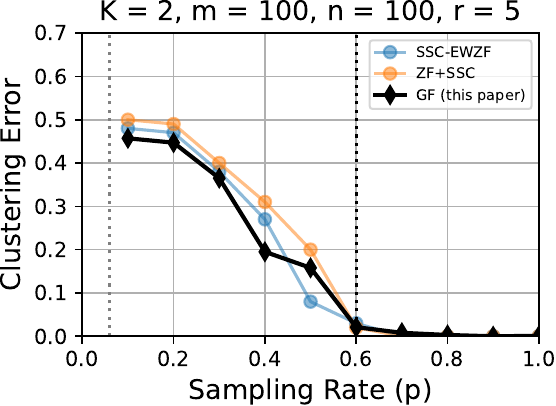}
\includegraphics[height = 0.155\linewidth, clip, trim = {0.5cm 0 0 0}]{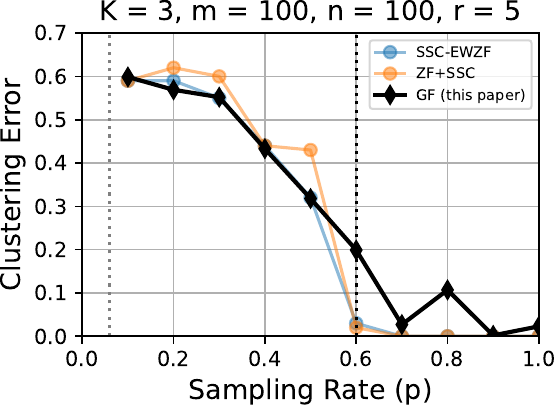}
\includegraphics[height = 0.155\linewidth, clip, trim = {0.5cm 0 0 0}]{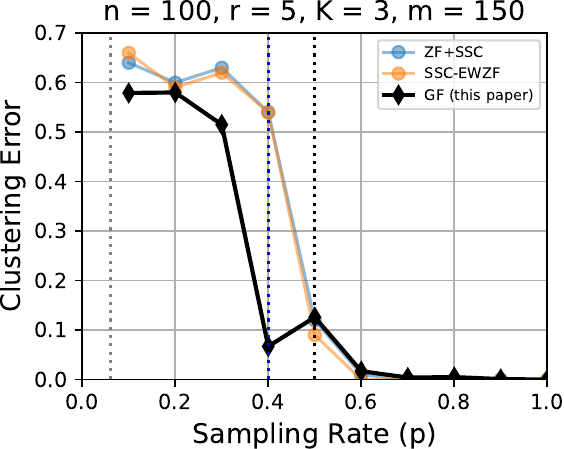}
\includegraphics[height = 0.155\linewidth, clip, trim = {0.5cm 0 0 0}]{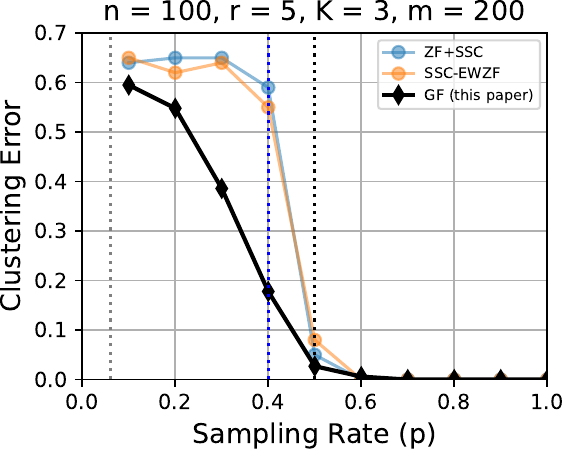}
\includegraphics[height = 0.155\linewidth, clip, trim = {0.5cm 0 0 0}]{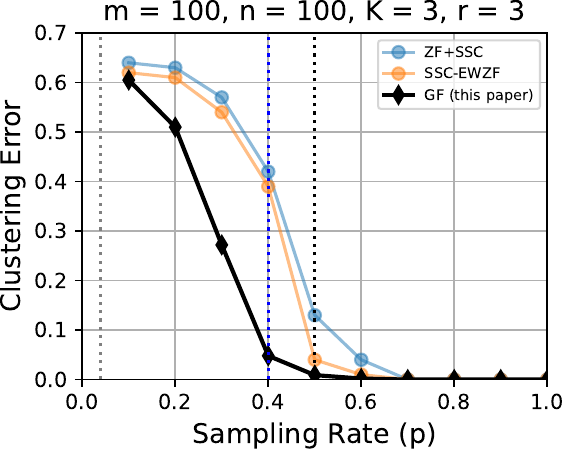}
\caption{Clustering error (average over 10 trials) as a function of sampling rate for different synthetic settings. The left-most vertical line at $\p^\star=(\r+1)/\min(\m,\n)$ represents the information-theoretic sampling limit \citep{pimentel2016information}. That is, HRMC is impossible for any $\p<\p^\star$, and is theoretically possible for any $\p \geq \p^\star$ (for example, with a brute-force combinatorial algorithm). The right-most vertical line indicates the limit of the state-of-the-art. The center vertical line indicates the sampling limit of our approach, which shortens the gap towards the theoretical limit.}
\label{synthetic_fig}
\end{figure*}

\textbf{Synthetic data.}
In all our simulations, we first generate $\K$ matrices $\U_\k^\star \in \R^{\m \times \r}$ with i.i.d.$\mathscr{N}(0,1)$ entries to serve as the bases of the \emph{true} subspaces. For each $\k$, we generate a matrix $\Thetab^\star_\k \in \R^{\r \times \n_\k}$, also with i.i.d.$\mathscr{N}(0,1)$ entries, to serve as the coefficients of the columns in the $\k^{\rm th}$ subspace. We then form $\X$ as the concatenation $[\U_1^\star\Thetab_1^\star, \ \U_2^\star\Thetab_2^\star, \ \dots, \ \U_\K^\star\Thetab_\K^\star]$. To induce missing data, we sample each entry independently with probability $p$. The penalty parameter $\lambda$ was set to $10^{-5}$ for all synthetic experiments.

We first present results for a scenario with 2 classes, 50 samples per cluster, an ambient (feature) dimension of 100, a missing rate of 50\%.
Figure \ref{fig:K2} shows the clustering accuracy, the sum of chordal distances, and the sum of geodesic distances over the training iterations. Additionally, we selected five key timestamps (marked as red dots at iterations 0, 600, 4800, 6900, and 9900) and used GrassCar\'e \citep{li2024grasscare}, a tool specifically designed for visualizing subspaces on the Grassmannian, to illustrate how the subspace proxies (yellow and blue dots) interact with our objective function.
Our observations indicate that GrassFusion initially focuses on minimizing the geodesic distances, pulling all proxies closer together. This initial phase does not necessarily improve accuracy. After 4800 iterations, the algorithm reached low geodesic distances and then experienced a sudden drop in chordal distance. This indicates that the proxies have adjusted to optimally represent the input observations, leading to high clustering accuracy.

We now present results across various settings, including different numbers of clusters, ambient dimensions, subspace ranks, and input missing rates. Given that the clustering performance of our selected baselines has been extensively studied in \citep{lane2019classifying}, and they all perform similarly with synthetically generated data, we chose to focus on SSC-EWZF and ZF+SSC for this experiment. These results are shown in Figure \ref{synthetic_fig}. Our method performs comparably to existing approaches and even excels in scenarios with a subspace rank of 3, especially in low-sampling regimes.

\textbf{Object tracking in Hopkins 155.}
This dataset comprises 155 videos featuring $\K=2$ or $\K=3$ moving objects, including checkerboards, vehicles, and pedestrians. In each video, a collection of $\n$ mark points are tracked across all frames. The locations over time of the $\i^{\rm th}$ point are stacked to form $\x_\i \in \R^\m$, such that points corresponding to the same object lie near a low-dimensional subspace \citep{kanatani_motion_2001} (with $\r$ varying from 1 to 3 depending on the video). For all cases, we fixed the penalty parameter $\lambda$ to $10^{-5}$. To simulate missing data (e.g., due to occlusions), we independently sampled each entry with probability $p$. 

The first plot in Figure \ref{real_fig} presents the clustering results. For each missing rate, the average clustering error across all 155 videos is displayed. GrassFusion generally performs well, with only a slight decline compared to GSSC and MSC.

\begin{figure*}[h]
\centering
\includegraphics[height=0.20\linewidth]{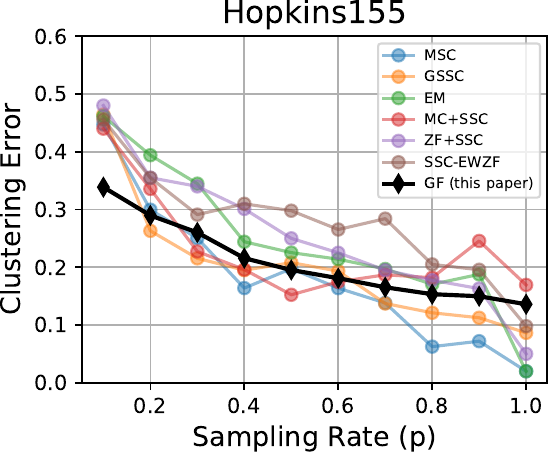}
\includegraphics[height=0.20\linewidth, clip, trim = {0.5cm 0 0 0}]{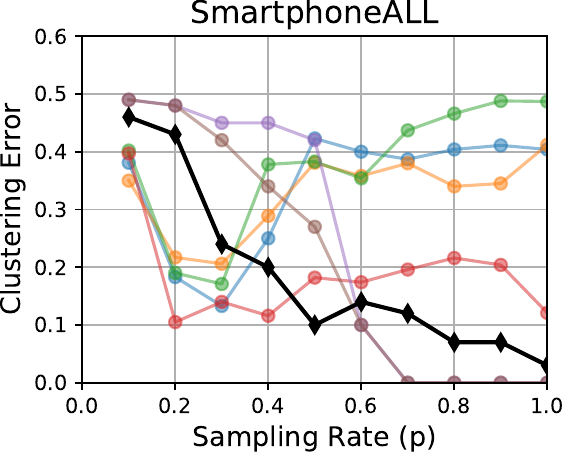}
\includegraphics[height=0.20\linewidth, clip, trim = {0.5cm 0 0 0}]{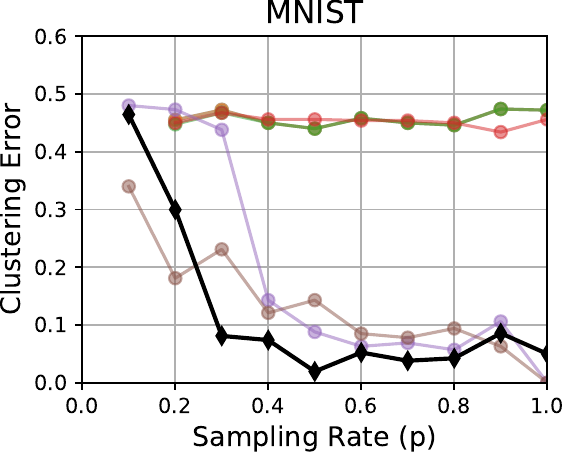}
\includegraphics[height=0.20\linewidth, clip, trim = {0.5cm 0 0 0}]{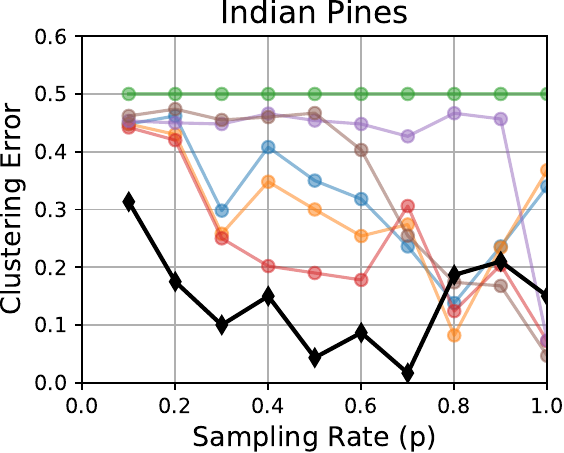} \\ \vspace{0.2cm}
\includegraphics[height=0.20\linewidth]{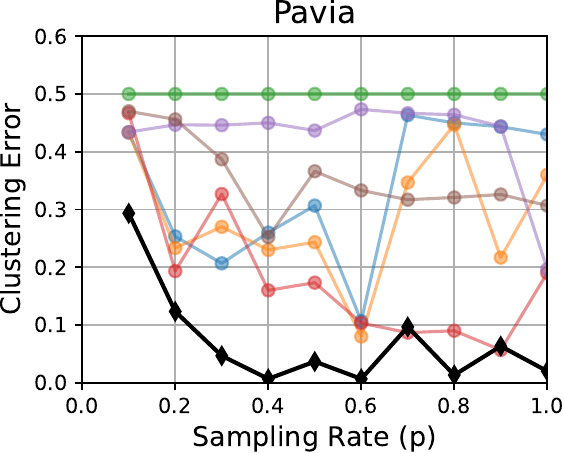} \includegraphics[height=0.20\linewidth, clip, trim = {0.5cm 0 0 0}]{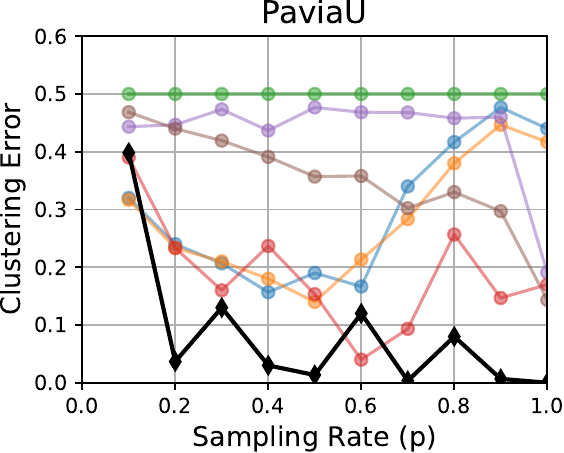} \includegraphics[height=0.20\linewidth, clip, trim = {0.5cm 0 0 0}]{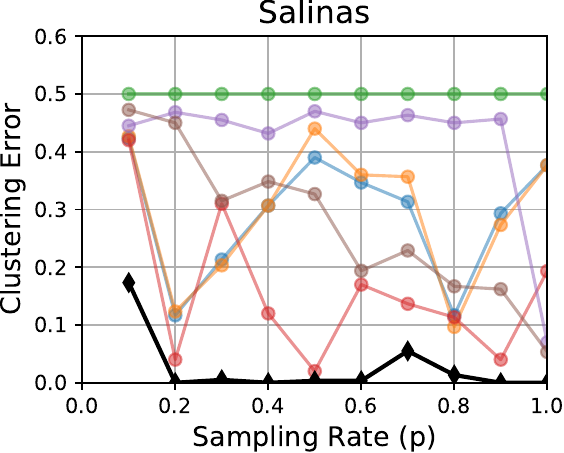} \includegraphics[height=0.20\linewidth, clip, trim = {0.5cm 0 0 0}]{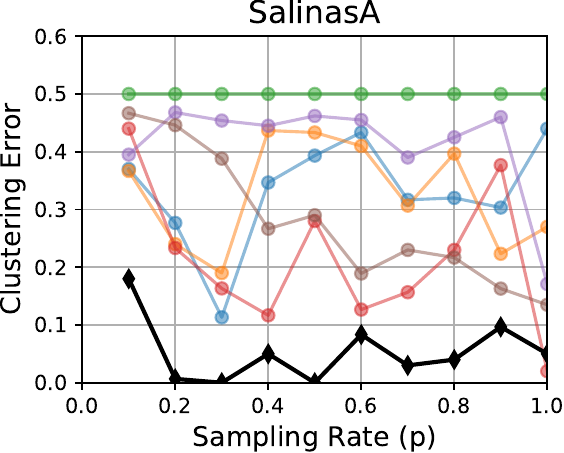} \\

\caption{Average clustering error at various input sampling rates on real-world datasets: Hopkins155 \textbf{(Object Trajectory)}, SmartphoneALL \textbf{(Smartphone Sensor Readings)}, MNIST \textbf{(Handwritten Digit Images)}, and Hyperspectral Imaging datasets \textbf{(Satellite Hyperspectral Sensor Readings)}.}
\label{real_fig}
\end{figure*}

\textbf{Human activity recognition in Smartphone AAL.}
This dataset contains $\n=5744$ instances, each with $\m=561$ features derived from pre-processed accelerometer and gyroscope time series and summary statistics \citep{anguita2013public}. The data corresponds to $\K=2$ activities: walking and other movements, each approximated by a subspace of dimension $\r=4$. Given that the complexity of \eqref{stiefelEq} is quadratic in $\n$, directly solving this dataset would result in unmanageable computational complexity. However, by employing the sketching techniques described in Model Section, it can be solved efficiently. Specifically, we used only $\m'=158$ features (related to the accelerometer's and gyroscope's minimum, maximum, standard deviation, and mean parameters over time) and $\n'=100$ randomly selected samples evenly distributed among classes. The penalty parameter $\lambda$ was fixed at $10^{-5}$ for all cases. 

The results are summarized in the second plot in Figure \ref{real_fig}. For each missing rate, the average clustering error across 20 runs is shown. GF outperforms ZF+SSC and SSC-EWZF in the low-sampling regime and demonstrates greater stability than GSSC, EM, MC+SSC, and MSC in the high-sampling regime. Notably, GF achieves the lowest clustering error at a 50\% sampling rate compared to all other methods.

\textbf{Handwritten Digits MNIST.}
This dataset consists of images from the MNIST dataset \citep{lecun1998gradient}, which includes 10 classes and a total of 60,000 training images. For our experiments, we randomly selected 2 classes, with each class containing 50 images, resulting in a total of 100 images. Prior to the experiments, we vectorized each image and stacked them to form a matrix. We then randomly sampled observed entries to introduce missing data. The penalty parameter $\lambda$ was fixed at $10^{-5}$ throughout the experiments.

The results, summarized in third plot in Figure \ref{real_fig}, show the average clustering error over 20 runs. We observe that SSC-EWZF performs marginally better at 10\% and 20\% sampling rates. However, beyond that, our method, GrassFusion, outperforms all other methods across different sampling rates.

\textbf{Hyperspectral Imaging (HSI) dataset.} 
These datasets consist of images captured from satellites or airborne platforms, focusing on various regions on Earth. Each pixel in these images includes spectral readings from over two hundred different wavelength bands. We selected four of the most widely recognized datasets: Indian Pine, covering $145 \times 145$ pixels with 224 spectral readings per pixel, featuring 16 classes; Pavia University, with $601 \times 601$ pixels and 103 spectral readings per pixel, categorized into 9 land use types; Salinas, depicting the Salinas Valley with $512 \times 217$ pixels and 224 spectral readings per pixel, including 16 classes; and Salinas A, a subset of the Salinas dataset, focusing on $86 \times 83$ pixels with 4 classes representing different stages of romaine lettuce growth. For our experiments, we randomly sampled 2 classes, with each selected class containing 50 spectral images, resulting in a total of 100 images. The penalty parameter $\lambda$ was fixed at $10^{-5}$ throughout the experiments. 

The results are summarized in the plots in Figure \ref{real_fig}. For each sampling rate, the average clustering error over 10 runs is presented. We observe that our model, GrassFusion, consistently performs the best across all sampling rates and is also the most stable algorithm across different scenarios.

%% file: aaai24.bib
@article{pymanopt,
    author = {James Townsend and Niklas Koep and Sebastian Weichwald},
    journal = {Journal of Machine Learning Research},
    number = {137},
    pages = {1–5},
    title = {Pymanopt: A Python Toolbox for Optimization on Manifolds using Automatic Differentiation},
    volume = {17},
    year = {2016}
}

@inproceedings{li2024group,
  title={Group-Sparse Subspace Clustering with Elastic Stars},
  author={Li, Huanran and Pimentel-Alarc{\H{o}}n, Daniel},
  booktitle={2024 IEEE International Symposium on Information Theory (ISIT)},
  pages={1961--1966},
  year={2024},
  organization={IEEE}
}

@inproceedings{salamatian2024metascritic,
  title={metAScritic: Reframing AS-Level Topology Discovery as a Recommendation System},
  author={Salamatian, Loqman and Vermeulen, Kevin and Cunha, Italo and Giotsas, Vasilis and Katz-Bassett, Ethan},
  booktitle={ACM Internet Measurement Conference (IMC’24)},
  year={2024}
}

@article{yao2024unlabeled,
  title={Unlabeled Principal Component Analysis and Matrix Completion},
  author={Yao, Yunzhen and Peng, Liangzu and Tsakiris, Manolis C},
  journal={Journal of Machine Learning Research},
  volume={25},
  number={77},
  pages={1--38},
  year={2024}
}

@article{xing2024segmentation,
  title={Segmentation and Completion of Human Motion Sequence via Temporal Learning of Subspace Variety Model},
  author={Xing, Zheng and Zhao, Weibing},
  journal={IEEE Transactions on Image Processing},
  year={2024},
  publisher={IEEE}
}

@article{ji2017deep,
  title={Deep subspace clustering networks},
  author={Ji, Pan and Zhang, Tong and Li, Hongdong and Salzmann, Mathieu and Reid, Ian},
  journal={Neural information proc. systems},
  volume={30},
  year={2017}
}

@inproceedings{ongie2017algebraic,
  title={Algebraic variety models for high-rank matrix completion},
  author={Ongie, Greg and Willett, Rebecca and Nowak, Robert D and Balzano, Laura},
  booktitle={International Conference on Machine Learning},
  pages={2691--2700},
  year={2017},
  organization={PMLR}
}

@inproceedings{lane2019classifying,
  title={Classifying and comparing approaches to subspace clustering with missing data},
  author={Lane, Connor and Boger, Ron and You, Chong and Tsakiris, Manolis and Haeffele, Benjamin and Vidal, Rene},
  booktitle={Proceedings of the IEEE/CVF International Conference on Computer Vision Workshops},
  pages={0--0},
  year={2019}
}

@article{lecun1998gradient,
  title={Gradient-based learning applied to document recognition},
  author={LeCun, Yann and Bottou, L{\'e}on and Bengio, Yoshua and Haffner, Patrick},
  journal={Proceedings of the IEEE},
  volume={86},
  number={11},
  pages={2278--2324},
  year={1998},
  publisher={Ieee}
}

@article{lecun1998mnist,
  title={The MNIST database of handwritten digits},
  author={LeCun, Yann},
  journal={http://yann. lecun. com/exdb/mnist/},
  year={1998}
}

@book{absil2009optimization,
  title={Optimization algorithms on matrix manifolds},
  author={Absil, P-A and Mahony, Robert and Sepulchre, Rodolphe},
  year={2009},
  publisher={Princeton University Press}
}

@article{mcrae2021low,
  title={Low-rank matrix completion and denoising under Poisson noise},
  author={McRae, Andrew D and Davenport, Mark A},
  journal={Information and Inference: A Journal of the IMA},
  volume={10},
  number={2},
  pages={697--720},
  year={2021},
  publisher={Oxford University Press}
}

@article{huang2021robust,
  title={Robust low-rank matrix completion via an alternating manifold proximal gradient continuation method},
  author={Huang, Minhui and Ma, Shiqian and Lai, Lifeng},
  journal={IEEE Transactions on Signal Processing},
  volume={69},
  pages={2639--2652},
  year={2021},
  publisher={IEEE}
}

@article{fan2021shrinkage,
  title={A shrinkage principle for heavy-tailed data: High-dimensional robust low-rank matrix recovery},
  author={Fan, Jianqing and Wang, Weichen and Zhu, Ziwei},
  journal={Annals of statistics},
  volume={49},
  number={3},
  pages={1239},
  year={2021},
  publisher={NIH Public Access}
}

@inproceedings{balzano2010online,
  title={Online identification and tracking of subspaces from highly incomplete information},
  author={Balzano, Laura and Nowak, Robert and Recht, Benjamin},
  booktitle={Communication, Control, and Computing (Allerton), 2010 48th Annual Allerton Conference on},
  pages={704--711},
  year={2010},
  organization={IEEE}
}

@article{recht2011simpler,
  title={A simpler approach to matrix completion},
  author={Recht, Benjamin},
  journal={Journal of Machine Learning Research},
  volume={12},
  number={Dec},
  pages={3413--3430},
  year={2011}
}

@inproceedings{chen2014coherent,
  title={Coherent matrix completion},
  author={Chen, Yudong and Bhojanapalli, Srinadh and Sanghavi, Sujay and Ward, Rachel},
  booktitle={Int. Conf. on Machine Learning},
  pages={674--682},
  year={2014}
}

@article{mishra2019riemannian,
  title={A Riemannian gossip approach to subspace learning on Grassmann manifold},
  author={Mishra, Bamdev and Kasai, Hiroyuki and Jawanpuria, Pratik and Saroop, Atul},
  journal={Machine Learning},
  volume={108},
  number={10},
  pages={1783--1803},
  year={2019},
  publisher={Springer}
}

@inproceedings{tsakiris2018theoretical,
  title={Theoretical analysis of sparse subspace clustering with missing entries},
  author={Tsakiris, Manolis and Vidal, Ren{\'e}},
  booktitle={International Conference on Machine Learning},
  pages={4975--4984},
  year={2018},
  organization={PMLR}
}

@inproceedings{yang2015sparse,
  title={Sparse subspace clustering with missing entries},
  author={Yang, Congyuan and Robinson, Daniel and Vidal, Rene},
  booktitle={Int. Conf. on Mach. Learning},
  pages={2463--2472},
  year={2015}
}

@article{von2007tutorial,
  title={A tutorial on spectral clustering},
  author={Von Luxburg, Ulrike},
  journal={Statistics and computing},
  volume={17},
  number={4},
  pages={395--416},
  year={2007},
  publisher={Springer}
}

@article{tang2018learning,
  title={Learning a joint affinity graph for multiview subspace clustering},
  author={Tang, Chang and Zhu, Xinzhong and Liu, Xinwang and Li, Miaomiao and Wang, Pichao and Zhang, Changqing and Wang, Lizhe},
  journal={IEEE Transactions on Multimedia},
  volume={21},
  number={7},
  pages={1724--1736},
  year={2018},
  publisher={IEEE}
}

@article{vidal2011subspace,
  title={Subspace clustering},
  author={Vidal, Ren{\'e}},
  journal={IEEE Signal Processing Magazine},
  volume={28},
  number={2},
  pages={52--68},
  year={2011},
  publisher={IEEE}
}

@article{peng2017constructing,
  title={Constructing the L2-graph for robust subspace learning and subspace clustering},
  author={Peng, Xi and Yu, Zhiding and Yi, Zhang and Tang, Huajin},
  journal={IEEE transactions on cybernetics},
  volume={47},
  number={4},
  pages={1053--1066},
  year={2017},
  publisher={IEEE}
}

@inproceedings{qu2015subspace,
  title={Subspace clustering with irrelevant features via robust Dantzig selector},
  author={Qu, Chao and Xu, Huan},
  booktitle={Neural Inf. Proc. Systems},
  pages={757--765},
  year={2015}
}

@inproceedings{wang2015differentially,
  title={Differentially private subspace clustering},
  author={Wang, Yining and Wang, Yu-Xiang and Singh, Aarti},
  booktitle={Neural Information Proc. Systems},
  pages={1000--1008},
  year={2015}
}

@inproceedings{pimentel2014sample,
  title={On the sample complexity of subspace clustering with missing data},
  author={Pimentel, Daniel and Nowak, R and Balzano, Laura},
  booktitle={Statistical Signal Processing (SSP), 2014 IEEE Workshop on},
  pages={280--283},
  year={2014},
  organization={IEEE}
}

@inproceedings{pimentel2015adaptive,
  title={Adaptive strategy for restricted-sampling noisy low-rank matrix completion},
  author={Pimentel-Alarc{\'o}n, Daniel L and Nowak, Robert D},
  booktitle={Computational Advances in Multi-Sensor Adaptive Processing (CAMSAP), 2015 IEEE 6th International Workshop on},
  pages={429--432},
  year={2015},
  organization={IEEE}
}

@inproceedings{pimentel2016group,
  title={Group-sparse subspace clustering with missing data},
  author={Pimentel-Alarc{\'o}n, Daniel and Balzano, Laura and Marcia, Roummel and Nowak, R and Willett, Rebecca},
  booktitle={Statistical Signal Processing Workshop (SSP), 2016 IEEE},
  pages={1--5},
  year={2016},
  organization={IEEE}
}

@inproceedings{pimentel2016information,
  title={The information-theoretic requirements of subspace clustering with missing data},
  author={Pimentel-Alarcon, Daniel and Nowak, Robert},
  booktitle={International Conference on Machine Learning},
  pages={802--810},
  year={2016}
}

@inproceedings{eriksson2011domainimpute,
  title={DomainImpute: Inferring unseen components in the Internet},
  author={Eriksson, Brian and Barford, Paul and Sommers, Joel and Nowak, Robert},
  booktitle={INFOCOM, 2011 Proceedings IEEE},
  pages={171--175},
  year={2011},
  organization={IEEE}
}

@article{li2021matrix,
  title={Matrix completion with column outliers and sparse noise},
  author={Li, Ziheng and Hu, Zhanxuan and Nie, Feiping and Wang, Rong and Li, Xuelong},
  journal={Information Sciences},
  volume={573},
  pages={125--140},
  year={2021},
  publisher={Elsevier}
}

@article{li2016structured,
  title={A structured sparse plus structured low-rank framework for subspace clustering and completion},
  author={Li, Chun-Guang and Vidal, Ren{\'e}},
  journal={IEEE Transactions on Signal Processing},
  volume={64},
  number={24},
  pages={6557--6570},
  year={2016},
  publisher={IEEE}
}

@article{fan2018non,
  title={Non-linear matrix completion},
  author={Fan, Jicong and Chow, Tommy WS},
  journal={Pattern Recognition},
  volume={77},
  pages={378--394},
  year={2018},
  publisher={Elsevier}
}

@article{gossip,
 title={A Riemannian gossip approach to subspace learning on Grassmann manifold},
 author={Mishra, B. and Kasai, H. and Jawanpuria, P. and Saroop, A.},
 journal={Machine Learning},
 volume={108},
 number={10},
 pages={1783--1803},
 year={2019}
}

@inproceedings{tron2007benchmark,
  title={A benchmark for the comparison of 3-d motion segmentation algorithms},
  author={Tron, Roberto and Vidal, Ren{\'e}},
  booktitle={2007 IEEE conference on computer vision and pattern recognition},
  pages={1--8},
  year={2007},
  organization={IEEE}
}

@article{edelman_geometry_1998,
  title={The geometry of algorithms with orthogonality constraints},
  author={Edelman, Alan and Arias, Tom{\'a}s A and Smith, Steven T},
  journal={SIAM journal on Matrix Analysis and Applications},
  volume={20},
  number={2},
  pages={303--353},
  year={1998},
  publisher={SIAM}
}

@inproceedings{malhat2014clustering,
  title={Clustering of chemical data sets for drug discovery},
  author={Malhat, Mohamed G and Mousa, Hamdy M and El-Sisi, Ashraf B},
  booktitle={2014 9th international conference on informatics and systems},
  pages={DEKM--11},
  year={2014},
  organization={IEEE}
}

@inproceedings{kun_huang_minimum_2004,
  title={Minimum effective dimension for mixtures of subspaces: A robust GPCA algorithm and its applications},
  author={Huang, Kun and Ma, Yi and Vidal, Ren{\'e}},
  booktitle={Proc. of IEEE Computer Society Conf. on Computer Vision and Pattern Recognition},
  volume={2},
  pages={II--II},
  year={2004},
  organization={IEEE}
}

@inproceedings{kanatani_motion_2001,
  title={Motion segmentation by subspace separation and model selection},
  author={Kanatani, Ken-ichi},
  booktitle={Proceedings Eighth IEEE International Conference on computer Vision. ICCV 2001},
  volume={2},
  pages={586--591},
  year={2001},
  organization={IEEE}
}

@phdthesis{gan2013application,
  title={The Application of Spectral Clustering in Drug Discovery},
  author={Gan, Sonny},
  year={2013},
  school={University of Sheffield}
}

@article{ullah2014n,
  title={N-screen aware multicriteria hybrid recommender system using weight based subspace clustering},
  author={Ullah, Farman and Sarwar, Ghulam and Lee, Sungchang},
  journal={The Scientific World Journal},
  volume={2014},
  year={2014},
  publisher={Hindawi}
}

@article{zhang2021rp,
  title={RP-LGMC: rating prediction based on local and global information with matrix clustering},
  author={Zhang, Wen and Wang, Qiang and Yoshida, Taketoshi and Li, Jian},
  journal={Computers \& Operations Research},
  volume={129},
  pages={105228},
  year={2021},
  publisher={Elsevier}
}

@article{koohi2017new,
  title={A new method to find neighbor users that improves the performance of collaborative filtering},
  author={Koohi, Hamidreza and Kiani, Kourosh},
  journal={Expert Systems with Applications},
  volume={83},
  pages={30--39},
  year={2017},
  publisher={Elsevier}
}

@inproceedings{eriksson2012high,
  title={High-rank matrix completion},
  author={Eriksson, Brian and Balzano, Laura and Nowak, Robert},
  booktitle={Artificial Intelligence and Statistics},
  pages={373--381},
  year={2012},
  organization={PMLR}
}

@article{soniinteger,
  title={Integer Programming Approaches To Subspace Clustering With Missing Data},
  author={Soni, Akhilesh and Linderoth, Jeff and Luedtke, Jim and Pimentel-Alarc{\'o}n, Daniel}
}

@inproceedings{balzano2012k,
  title={K-subspaces with missing data},
  author={Balzano, Laura and Szlam, Arthur and Recht, Benjamin and Nowak, Robert},
  booktitle={2012 IEEE Statistical Signal Processing Workshop (SSP)},
  pages={612--615},
  year={2012},
  organization={IEEE}
}

@inproceedings{derksen2007segmentation,
  title={Segmentation of multivariate mixed data via lossy coding and compression},
  author={Derksen, Harm and Ma, Yi and Hong, Wei and Wright, John},
  booktitle={Visual Comm. and Image Proc.},
  volume={6508},
  pages={170--181},
  year={2007},
  organization={SPIE}
}

@article{tipping1999probabilistic,
  title={Probabilistic principal component analysis},
  author={Tipping, Michael E and Bishop, Christopher M},
  journal={Journal of the Royal Statistical Society: Series B (Statistical Methodology)},
  volume={61},
  number={3},
  pages={611--622},
  year={1999},
  publisher={Wiley Online Library}
}

@inproceedings{agarwal2004k,
  title={K-means projective clustering},
  author={Agarwal, Pankaj K and Mustafa, Nabil H},
  booktitle={Proceedings of the twenty-third ACM SIGMOD-SIGACT-SIGART symposium on Principles of database systems},
  pages={155--165},
  year={2004}
}

@article{li2024grasscare,
  title={GrassCar{\'e}: Visualizing the Grassmannian on the Poincar{\'e} Disk},
  author={Li, Huanran and Pimentel-Alarc{\'o}n, Daniel},
  journal={SN Computer Science},
  volume={5},
  number={3},
  pages={1--15},
  year={2024},
  publisher={Springer}
}

@article{dai2012,
  title={A Geometric Approach to Low-Rank Matrix Completion},
  author={Wei, Dai and Ely, Kerman and Olgica, Milenkovic},
  journal={IEEE Trans. on Inf. Theory},
  volume={58},
  number={1},
  pages={237--247},
  year={2012}
}

@inproceedings{anguita2013public,
  title={A public domain dataset for human activity recognition using smartphones.},
  author={Anguita, Davide and Ghio, Alessandro and Oneto, Luca and Parra, Xavier and Reyes-Ortiz, Jorge Luis},
  booktitle={Esann},
  volume={3},
  pages={3},
  year={2013}
}

@misc{UCI ,
author = {Anguita, Davide and Ghio, Alessandro and Oneto, Luca and Parra, Xavier and Reyes-Ortiz, Jorge Luis},
year = "2013",
title = "{UCI} Machine Learning Repository",
institution = "University of California, Irvine, School of Information and Computer Sciences" }

@article{elhamifar2016high,
  title={High-rank matrix completion and clustering under self-expressive models},
  author={Elhamifar, Ehsan},
  journal={Neural Inf. Proc. Systems},
  volume={29},
  pages={73--81},
  year={2016}
}

@inproceedings{bottou1995convergence,
  title={Convergence properties of the k-means algorithms},
  author={Bottou, Leon and Bengio, Yoshua},
  booktitle={Neural information proc. systems},
  pages={585--592},
  year={1995}
}

@article{schubert2017dbscan,
  title={DBSCAN revisited: why and how you should (still) use DBSCAN},
  author={Schubert, Erich and Sander, J{\"o}rg and Ester, Martin and Kriegel, Hans Peter and Xu, Xiaowei},
  journal={ACM Transactions on Database Systems (TODS)},
  volume={42},
  number={3},
  pages={1--21},
  year={2017},
  publisher={ACM New York, NY, USA}
}

@article{vidal2005generalized,
  title={Generalized principal component analysis (GPCA)},
  author={Vidal, Rene and Ma, Yi and Sastry, Shankar},
  journal={IEEE transactions on pattern analysis and machine intelligence},
  volume={27},
  number={12},
  pages={1945--1959},
  year={2005},
  publisher={IEEE}
}

@article{traganitis2017sketched,
  title={Sketched subspace clustering},
  author={Traganitis, Panagiotis A and Giannakis, Georgios B},
  journal={IEEE Trans. on Signal Proc.},
  volume={66},
  number={7},
  pages={1663--1675},
  year={2017},
  publisher={IEEE}
}

@article{conway1996,
    title={Packing lines, planes, etc., packing in Grassmannian spaces},
    author={Conway, John H. and Hardin, Ronald H. and Sloane, Neil J. A.},
    journal={Exper. Math.},
    volume={5},
    number={2},
    pages={139--159},
    year={1996}
}
